\newcommand{\ie}{\emph{i.e.}}
\newcommand{\prob}{\mathbb{P}}
\newcommand{\expectation}{\mathbb{E}}
\newcommand{\pretraining}{\mathcal{D}}
\newcommand{\downstream}{\mathcal{\tilde{D}}}
\newcommand{\delimetertoken}{``\backslash n"}
\newcommand{\lm}{f_{\theta}}
\newtheorem{assumption}{Assumption}
\newtheorem{lemma}{Lemma}
\newtheorem{theorem}{Theorem}
\newtheorem{definition}{Definition}
\title{The Learnability of In-Context Learning}
\author{Noam Wies, Yoav Levine \& Amnon Shashua\\
	The Hebrew University of Jerusalem\\
	\texttt{\{noam.wies,yoav.levine,shashua\}@cs.huji.ac.il} \\
}
\begin{document}
	
	\maketitle
	
	\begin{abstract}%
		In-context learning is a surprising and important phenomenon that emerged when modern language models were scaled to billions of learned parameters. 
		Without modifying a large language model's weights, it can be tuned to perform various downstream natural language tasks simply by including concatenated training examples of these tasks in its input.
		Though disruptive for many practical applications of large language models, this emergent learning paradigm is not well understood from a theoretical perspective. In this paper, we propose a first-of-its-kind PAC based framework for in-context learnability, and use it to provide the first finite sample complexity results for the in-context learning setup.
		Our framework includes an initial pretraining phase, which fits a function to the pretraining distribution, and then a second in-context learning phase, which keeps this function constant and concatenates training examples of the downstream task in its input.
		We use our framework in order to prove that, under mild assumptions, when the pretraining distribution is a mixture of latent tasks (a model often considered for natural language pretraining), these tasks can be \textit{efficiently} learned via in-context learning, even though the model's weights are unchanged and the input significantly diverges from the pretraining distribution.
		Our theoretical analysis reveals that in this setting, in-context learning is more about identifying the task than about learning it, a result which is in line with a series of recent empirical findings. 
		We hope that the in-context learnability framework presented in this paper will facilitate future progress towards a deeper understanding of this important new learning paradigm.
	\end{abstract}
	
	\section{Introduction}
	
	The practice of pretraining language models (LMs) over massive general purpose text corpora has revolutionized natural language processing in recent years~\citep{radford2019language,devlin2018bert,brown2020language}.
	After an LM has been pretrained, the common approach for applying it to a specific downstream natural language task is to further train it on task-specific data, in a procedure referred to as fine-tuning~\citep{howard-ruder-2018-universal,radford2018improving,devlin2018bert}. 
	In their influential GPT3 paper, ~\citet{brown2020language} showed that a non-trivial alternative to fine-tuning emerges when the LM is large enough: 
	an LM can be specialized to a downstream natural language task by simply receiving in its input a string composed of concatenated training examples of this task. Importantly, while the LM’s weights are \emph{unchanged} in this procedure, some form of learning evidently takes place; the performance on the downstream task was shown to significantly improve with the number of concatenated training examples, for a disparate variety of natural language tasks. 
	
	This phenomenon, referred to as \emph{in-context learning}, has had a profound practical impact on the applicability of large LMs: one does not need to have any access to the model weights in order to specialize the model for a certain task. Instead, a string of training examples provided even via API access to the model is enough, and often not many examples are required~\citep{brown2020language}. However, despite its growing popularity in a multitude of use-cases~\citep{bommasani2021opportunities},
	the reasons for the effectiveness of in-context learning in pretrained LMs are not well understood from a theoretical perspective. 
	In particular, this new learning paradigm still lacks a formal definition of learning. 
	
	In this paper, we propose a PAC learning~\citep{valiant1984theory} definition for in-context learning, along with the first finite sample-complexity learning results for in-context learning. 
	Our results shed light on the mysterious question of why in-context learning works even though a string of concatenated input-output pairs does not resemble the natural distribution of pretraining examples.
	Our framework is based on an interpretation of pretraining as unsupervised multi-task learning of many natural language tasks, which dates back to the GPT2 paper~\citep{radford2019language}, one of the instigators of the LM pretraining era.
	This view, which has since been widely adopted, was supported by experiments that revealed non-trivial zero-shot capabilities of pretrained LMs on a variety of natural language tasks. 
	In other words, an LM pretrained only to maximize the likelihood of unlabeled text from the web, was able to perform reasonably well on a variety of downstream natural language tasks {without learning from any additional training examples post pretraining}, implying that many different natural tasks are directly learned during pretraining.
	
	Following this interpretation, we prove our in-context learning results in a setting in which the pretraining distribution is a mixture of downstream tasks. Importantly, the pretraining examples are not explicitly associated with downstream tasks, but rather they are drawn from a mixture of tasks and the association between examples and tasks is latent.
	We prove that under mild assumptions, in-context learning is \textit{guaranteed} to happen for a model trained by such multi-task pretraining. 
	We show that the in-context learning mechanism of concatenating input/output pairs of a certain task allows the pretrained LM to uncover the latent task and improve its performance \textit{without modifying its weights}. 
	Our assumptions are quite general, namely that there is a lower bound on the probability of any mixture component and of any token, and that the downstream tasks are both distinguishable and have sufficient margin between different labels. %
	See Section~\ref{section:the_analyzed_hypothesis_class} for a formal definition of our assumptions.
	
	The interpretation of in-context learning as uncovering tasks that were already learned during pretraining is also supported by empirical evidence. For example, both~\cite{min2022rethinking} and~\citet{lyu2023zicl} showed that replacing the labels provided in the in-context training examples with random task labels barely affects the performance of in-context learning, implying that the in-context learning mechanism is more about identifying the task than about learning it. %
	Similarly, both~\citet{webson-pavlick-2022-prompt} and~\citet{lampinen2022can} studied whether LMs truly understand the text of the in-context examples, and found that irrelevant text that mimic the task can have a similar effect as learning with true training examples.
	
	Finally, the latent task inference perspective of in-context learning was also examined theoretically.~\cite{xie2022an} study a language-inspired toy distribution for pretraining. Specifically, they analyze a pretraining distribution consisting of a mixture of Hidden Markov Models (HMMs), where each HMM factor corresponds to a single task.
	They showed that in this setting in-context learning was guaranteed, however, unlike our work, their analysis is confined only to the infinite limit of the number of in-context examples. 
	We provide polynomial sample complexity guarantees for in-context learning, which are much more relevant to its efficiency in practice (often referred to as ``few-shot"). Furthermore,~\cite{xie2022an} limit their analysis only to a certain class of mixture of HMMs as pretraining distribution, while our results are for any mixture distribution under mild assumptions (see above). Lastly, their analysis assumes perfect learning of the pretraining distribution while our PAC approach captures imperfect pretraining. Beyond~\cite{xie2022an}, several recent papers~\citep{akyrek2023what,dai2022can,von2022transformers} showed that from an expressivity point of view, self-attention architectures can implement a gradient descent algorithm with in-context examples. However, these works do not justify why pretraining on natural data will converge to weights that implement gradient decent, nor do they provide finite sample-complexity results for in-context learning.
	Overall, our presented PAC framework for in-context learning (Section~\ref{section:definition}) allows us to prove the first finite in-context learning sample-complexity results, in a quite general setting (Section~\ref{section:a_latent_concept_inference_perspective_on_in-_ontext_learning}). We hope that this framework will facilitated a further expansions of the understanding of in-context learning as a new learning paradigm.

	\section{A PAC Learnability Framework for In-Context Learning }
	\label{section:definition}
	In this section, we define a Probably Approximately Correct (PAC) learnability framework~\citep{valiant1984theory} for in-context learning. Conceptually, we aim to adjust the PAC learnability framework, in order to capture the in-context few-shot learning capabilities of large language models~\citep{brown2020language,wei2022finetuned,sanh2022multitask,ouyang2022training}.

	We begin by describing the in-context learning paradigm.
	Let $f_\theta$ be a function fitted to a massive general purpose  pretraining distribution. In order to apply $f_\theta$ for a specific downstream task, it is common to further train $f_\theta$ on pairs of the task's inputs $x_1,\dots,x_k$ with their corresponding labels $y_1,\dots,y_k$, in a procedure referred to as fine-tuning. In-context learning is an alternative, simpler technique, which ``teaches'' $f_\theta$ the task at hand by constructing a \textit{prompt} comprised of concatenated training examples. Specifically, denote the string concatenation operator by $\oplus$, and let $\delimetertoken$ be a special delimiter token. Then, the in-context learning technique is to construct a prompt $p$ by concatenating pairs of the task's inputs along with theirs labels:
	\begin{align}
	p\coloneqq x_{1}\oplus y_{1}\oplus\delimetertoken\oplus\dots\oplus x_{k}\oplus y_{k}\oplus\delimetertoken
	\end{align}
	Given this prompt, the prediction is made by choosing the label $y$ that is the most likely continuation for the prefix $p\oplus x$ according to the function $\lm$. In other words, by predicting the label $y$ that maximizes $\lm\left(p\oplus x\oplus y\right)$. Notably, the prefix $p\oplus x$ does not resemble inputs that $\lm$ has trained on.
	Note that we used the newline symbol $\delimetertoken$ for clarity. However, in practice the delimiter token might depend on implementation and could, for instance, be two successive newlines $``\backslash n\backslash n"$ as done in the few-shot learning evaluation framework Evaluation Harness~\citep{eval-harness}.
	
	We now present our in-context learning learnability definition. Let $\pretraining$ be a pretraining distribution over strings of characters from an alphabet $\Sigma$. And let $\downstream$ be a downstream task's distribution over pairs of strings $x\in\Sigma^{\star}$ and theirs corresponding labels $y\in\Sigma^{\star}$. 
	We aim to define distribution-dependent learnability in the in-context learning setup, for a frozen probabilistic model $f_\theta$ that originally trained to maximize the likelihood of the pretraining distribution $\pretraining$. Then, given a prompt $p\sim\downstream^{k}$ which consist of $k$ pairs of inputs and theirs corresponding labels, each independently sampled from $\downstream$, the model $\lm$ is tested on the zero-one loss of the in-context predictor:
	\begin{align}\label{equation:in_context_learning_loss}
	L_{\text{in-context},\tilde{D}}\coloneqq\expectation_{x,y\sim\downstream}\left[l_{0-1}\left(\text{argmax}_{y'}f_{\theta}\left(p\oplus x\oplus y'\right),y\right)\right]
	\end{align}
	Since we are interested in frozen models as opposed to models that are fine-tuned, we will require that the same $f_\theta$ model achieves a low $L_{\text{in-context},\downstream}$ loss for multiple downstream tasks simultaneously.
	The following is our PAC learning definition for in-context learning of a model that was pretrained to maximize the likelihood of the pretraining distribution.
	
	\begin{definition}\label{definition:in_context_learning}
		Let $\mathcal{H}$ and $\tilde{\mathcal{H}}$ be hypothesis classes of pretraining distributions and downstream distributions respectively. We say that $\tilde{\mathcal{H}}$ is in-context learnable after pretraining on a pretraining distribution $\pretraining\in\mathcal{H}$, if there exist a pretraining algorithm $\mathcal{A}$ and a pair of functions $m_{\mathcal{H}},m_{\tilde{\mathcal{H}}}:\left(0,1\right)^{2}\rightarrow\mathbb{N}$ with the following property: For every $\epsilon,\delta>0$ and every $\downstream\in\tilde{\mathcal{H}}$, if the number of pretraining examples n is greater than $m_{\mathcal{H}}\left(\epsilon,\delta\right)$, and if the number of the downstream tasks' examples k is greater than $m_{\tilde{\mathcal{H}}}\left(\epsilon,\delta\right)$, then with probability of at least $1-\delta$ (over the choice of the $n+k$ examples) the following holds:
		\begin{align}
		L_{\text{in-context},\downstream}-\text{Bayes Error Rate}\le\epsilon
		\end{align}
	\end{definition}
	In addition, we will say that a collection of downstream tasks $\tilde{\mathcal{H}}$ is \textbf{efficiently} in-context learnable after pretraining on $\pretraining$, if both $m_{\mathcal{H}}$ and $m_{\tilde{\mathcal{H}}}$ in the above definition are polynomial in $\epsilon^{-1}$ and $\delta^{-1}$.
	
	In order to focus on in-context learning questions that are relevant for the natural language processing setup, and avoid complications that are not necessary for answering them, we will assume that the pretraining distribution $\pretraining$ belongs to some hypothesis class $\mathcal{H}$ that is learnable by some pretraining algorithm: 
	\begin{assumption}\label{assumption:existence_of_pre_training_algorithm}
		There exists a learning algorithm $\mathcal{A}$, and a sample complexity function $m_{\pretraining}:\left(0,1\right)^{2}\rightarrow\mathbb{N}$, with the following property: 
		for any pretraining distribution $\pretraining\in\mathcal{H}$ and any $\epsilon,\delta>0$, if the number of pretraining examples $n$ is greater than $m_\pretraining\left(\epsilon,\delta\right)$, then with probability at least $1-\delta$ over the pretraining examples, for any $T\ge 1$, the total variation of the conditional distributions of the $T$ 'th token is at most $\epsilon$: 
		\begin{align}
		\max_{o_{1}\dots o_{T}\in\Sigma}\left|\prob_{\pretraining}\left(o_{T}|o_{1}\dots o_{T-1}\right)-\lm\left(o_{T}|o_{1}\dots o_{T-1}\right)\right|<\epsilon
		\end{align}
		where $f_{\theta}$ denotes the model yielded by the pretraining algorithm $\mathcal{A}$.
		In addition, $m_{\pretraining}$ is polynomial in both $\epsilon^{-1}$ and $\delta^{-1}$.
	\end{assumption}

	With the above formal definition of in-context learning, we aim to shed some light on the mysterious in-context learning abilities of large LMs. Specifically, in the next section, we will shed light on the following major question: How do \textit{frozen} pretrained models learn from prompts that do not resemble their pretraining distribution?
	\section{Guarantees on In-Context Learning}
	\label{section:a_latent_concept_inference_perspective_on_in-_ontext_learning}
	In this section, we will demonstrate the use of the above learnability framework for in-context learning and analyze a setting in which pretraining a simple language model provably leads to in-context learning capabilities. For doing so, we follow previous work and view the language modeling task as an implicit multi-task setup. 
	Indeed, creating human-like text involves many different skills, from grammar to world knowledge, so learning a language model inevitably develops a variety of skills~\citep{gulordava-etal-2018-colorless,zhang-bowman-2018-language,weber-etal-2021-language}. The implicit unsupervised multi-task view of language modeling can be traced to the GPT2 paper~\citep{radford2019language}, which revealed that pretrained LMs are capable of a wide variety of natural language tasks without the need for further training. Since then, this view has been reinforced for example by~\cite{hendrycks2021measuring}. They showed that on a diverse massive set of 57 real world text understanding tasks, the largest GPT-3~\citep{brown2020language} language model improves over random chance by almost 13 percents points on average. Importantly, these results were obtained in a zero-shot learning setting,~\ie~in a setting where the language model had only been pretrained to maximize the likelihood of unlabeled text. Accordingly, these results suggest that many different natural tasks are directly learned during pretraining.
	
	We reflect the above implicit multi-task nature of language modeling by assuming that the pretraining distribution contains a latent variable that represents the task at hand. We show below that for such pretraining distributions, adding training examples to the in-context learning prompt implicitly reveals the already learned latent task.
	In Subsection~\ref{section:the_analyzed_hypothesis_class}, we present a multi-task pretraining hypothesis class, for which Subsection~\ref{section:our_results} shows that in-context learning reveals what task is currently being performed.
	
	\subsection{The analyzed latent concept hypothesis class}\label{section:the_analyzed_hypothesis_class}
	In this subsection, we describe the analyzed multi-task pretraining hypothesis class, as well as the corresponding downstream task hypothesis class that can be learned in-context after pretraining. To begin, we define the pretraining distribution as a  mixture of multiple downstream tasks. Importantly, during pretraining the downstream task of each example is unknown,~\ie~it is a latent variable, and thus pretraining is not equivalent to fine-tuning on the task since the model cannot simply ignore pretraining examples of irrelevant tasks. 
	Specifically, we generate a length-$T$ pretraining example $x_1,\dots,x_T\in\Sigma$ from the pretraining distribution $\pretraining$ by first sampling a concept $\phi$ from a family of concepts $\Phi$ according to a prior $\prob\left(\phi\right)$. We then sample the tokens according to the concept's specific distribution $\prob_{\phi}\left(x_1,\dots,x_T\right)$. 
	
	Moving to the downstream tasks, we will prove in-context learnability results for tasks where the underlying inputs distribution is a component $\phi$ in the pretraining mixture distribution $\pretraining$. Formally, we generate a length $T$ downstream task example $x$ and the corresponding label $y$ from $\downstream$ by first sampling $T$ tokens $o_1,\dots,o_{T}$ according to the downstream task distribution $\prob_{\phi}\left(o_1,\dots,o_{T}\right)$. Then we assemble $x$ using all tokens except the last one, and set the label to be that token~\ie~we set that $x_t=o_t$ for any $t <T$ and that $y=o_{T}$. Note that in principle the concatenation of independent examples in $p$ causes a distribution drift from the pretraining distribution\footnote{Formally, the distribution drift is caused by the fact that the concatenation of the $\delimetertoken$ after $y$ ignores the marginal probabilities of $\delimetertoken$.}. In this sense, the analyzed model captures the fact that few-shot prompts are unnatural since they are not encountered during pretraining. 
	
	Now we describe assumptions about the pretraining distributions, for which we will prove our in-context learnability results. Our first assumption requires that given a delimiter token $\delimetertoken$, two successive strings $s_1$ and $s_2$ that are concatenated with $\delimetertoken$ are approximately independent according to $\prob_{\phi}$:
	\begin{assumption}\label{assumption:new_line_conditional_independence}
		There exists a constant $0<c_1\le 1$ such that for any two strings $s_1,s_2$ in $\Sigma^{\star}$ and any concept $\phi$ the following holds:
		\begin{align}\label{equation:new_line_conditional_independence}
		c_{1}\le\frac{\prob_{\phi}\left(s_{1}\oplus\delimetertoken\right)\cdot\prob_{\phi}\left(s_{2}\right)}{\prob_{\phi}\left(s_{1}\oplus\delimetertoken\oplus s_{2}\right)}\le\frac{1}{c_{1}}
		\end{align}
	\end{assumption}
	Note that when the two successive strings $s_1$ and $s_2$ are exactly independent, the probability ratio in Equation~\ref{equation:new_line_conditional_independence} is equal to one, and the constant $c_1$ quantifies the deviation from this situation.
	While this assumption might sound restrictive, it is reasonable to assume that two consecutive paragraphs are not highly dependent according to the distributions in the pretraining mixture. Intuitively, we will use this assumption in order to apply concentration inequalities to the likelihood of the in-context prompt, and we will deduce that the role of the in-context prompt is to reweight the prior regarding the different mixture components. It is important to note that the approximate independence assumption for any component in the mixture does not imply approximate independence of the mixture distribution itself. Hence this reweighting is possible, since the assumption does not imply that the in-context prompt is ignored.
	
	Beyond this approximate independence, we will also require that there exist a lower bound on the conditional probability of any single token:
	\begin{assumption}\label{assumption:single_token_prob_lower_bound}
		There exist a constant $c_2>0$ such that for any string $s$ in $\Sigma^{\star}$, any character $\sigma\in\Sigma$, and any concept $\phi$ the following holds:
		\begin{align}
		\prob_{\phi}\left(\sigma\,|\,s\right)>c_{2}
		\end{align}
	\end{assumption}
	Basically, we need this assumption in order to avoid the harm of zero likelihood due to the unnatural concatenation of input and output pairs in the prompt $p$ (where the prompt $p$ is defined in Section~\ref{section:definition}). Note that without such an assumption, in-context learning is impossible since the probability of a prompt $p$ might be zero and hence the prediction of the model in such cases becomes meaningless.
	Finally, we assume that the prior distribution is strictly positive. In other words, we will assume that there is a lower bound higher than zero on the likelihood of any concept appearing in the pretraining distribution.
	\begin{assumption}\label{assumption:prior_lower_bound}
		There exist a constant $c_3>0$ such that for the prior $\prob_{\pretraining}\left(\phi\right)$ of any concept $\phi$, is at least $c_3$.
	\end{assumption}
	Clearly, without such an assumption, the prior of the downstream task can be arbitrarily low, which means it will be nearly impossible to recognize the task. In the next subsection, we will use the above assumptions in order to provide in-context learning guarantees via the mechanism of task recognition.
	
	\subsection{Guarantees on In-Context Learning via Latent Concept Inference}\label{section:our_results}
	In this subsection, we analyze the prediction of an in-context learning model in the setting described in Subsection~\ref{section:the_analyzed_hypothesis_class}. We show that in this setting, there is a polynomial sample complexity that guarantees in-context learning is \textbf{P}robably \textbf{A}pproximately \textbf{C}orrect.
	At a high level, since the pretraining is not precise and can only approximate the pretraining distribution $\pretraining$ up to some error $\bigtriangleup_{\text{pretraining}}>0$ (see Assumption~\ref{assumption:existence_of_pre_training_algorithm}), we will split the in-context prediction analysis into two parts. The first part will involve the simpler case of test examples $x$ and corresponding label candidates $y$ and $\tilde{y}$ for which the margin between the conditional likelihoods of the labels 
	$\prob_{\downstream}\left(y\,|\,x\right)$ and $\prob_{\downstream}\left(\tilde{y}\,|\,x\right)$ is large enough. In this scenario, we show that both the deviation due to imperfect pretraining and due to imperfect task recognition is negligible. Therefore, we will conclude that such deviations do not have any impact on the loss of in-context learning. In the second scenario, when the difference between ground-truth likelihoods is small, the error rate of the Bayes optimal classifier must be high. Accordingly, even though the in-context predictor might confuse between labels, the loss in any case will be small because we only compare it to the error rate of the Bayes optimal classifier.
	
	Starting with the first scenario, where the margin between label candidates is sufficiently large, we will prove that as more examples are added to the prompt, the in-context predictions converge to the correct label. As a preliminary step, we prove a lemma regarding the ratio of the prompt likelihoods $\prob_{\phi} (p)$ (where the prompt $p$ is defined in Section~\ref{section:definition}) according to the ground-truth task versus the other mixture components. In other words, we ask how likely it is that the prompt of concatenated examples was sampled according to one of the tasks distributions versus the other tasks distributions. We will use this lemma in order to estimate the effect of the in-context prompt, on the prior regarding the different mixture components. Specifically, we denote by $\bigtriangleup_{\text{KL}}$ the minimum Kullback–Leibler divergence between the ground-truth component, and the other mixture components. Then, we prove that the ratio of the prompt probabilities according to the ground-truth components converge to zero, with rate that is exponential in both the number of in-context examples $k$, and in the minimal Kullback–Leibler divergence $\bigtriangleup_{\text{KL}}$. Intuitively, the exponential rate with regard to the number of examples comes naturally from the fact that each example in the prompt is sampled independently of the others. As a result, their effect is a multiplicative one. Additionally, the Kullback-Leibler divergence between the ground truth component and the other mixture components measures log probabilities, while we are interested in the probabilities themselves. Hence the rate is also exponential in the above Kullback-Leibler divergence. Formally, we have that:
	\begin{lemma}\label{lemma:ratio_of_the_prompt_probabilities_according_to_the_mixture_components}
		Let $\pretraining$ be a pretraining distribution for which assumptions~\ref{assumption:new_line_conditional_independence},\ref{assumption:single_token_prob_lower_bound} hold, and let $\phi^{\star}$ be a downstream task mixture component from $\pretraining$ such that $\bigtriangleup_{\text{KL}}>8\log\frac{1}{c_1\cdot c_{2}}$. Then, there exists $m_{\downstream}:\left(0,1\right)^{2}\rightarrow\mathbb{N}$ with the following property: for any $\epsilon,\delta>0$, if the number of in-context examples $k$ is at least $m_{\downstream}\left(\epsilon,\delta\right)$, then, $\frac{\prob_{\phi}\left(p\right)}{\prob_{\phi^{\star}}\left(p\right)}<\epsilon$ with probability of at least $1-\delta$ (over the choice of the $k$ in-context examples).
		Moreover, the above still holds when the labels in $p$ are randomly flipped, and $m_{\downstream}$ can be chosen such that it will be polynomial in $\log{\frac{1}{\delta}},\log{\frac{1}{\epsilon}},\log{\frac{1}{c_1\cdot c_2}},\frac{1}\bigtriangleup_{\text{KL}}$ and $T$.
	\end{lemma}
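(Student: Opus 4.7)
The plan is to upper-bound $\log\frac{\prob_{\phi}(p)}{\prob_{\phi^{\star}}(p)}$ by a sum of $k$ i.i.d.\ bounded random variables with sufficiently negative mean plus a deterministic slack, and then apply Hoeffding's inequality. Setting $e_{i}:=x_{i}\oplus y_{i}\oplus\delimetertoken$ so that $p=e_{1}\oplus\cdots\oplus e_{k}$, observe that every partial concatenation $e_{1}\oplus\cdots\oplus e_{j-1}$ ends in $\delimetertoken$, so Assumption~\ref{assumption:new_line_conditional_independence} can be applied $k-1$ times to peel off the examples one at a time. Doing this for both the numerator and the denominator and taking logs yields
\begin{align}
\log\frac{\prob_{\phi}(p)}{\prob_{\phi^{\star}}(p)} \le \sum_{i=1}^{k}\log\frac{\prob_{\phi}(e_{i})}{\prob_{\phi^{\star}}(e_{i})} + 2(k-1)\log\tfrac{1}{c_{1}}.
\end{align}

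The $k$ summands are i.i.d.\ because each $e_{i}$ consists of $T$ body tokens drawn independently from $\prob_{\phi^{\star}}$ followed by the fixed delimiter. Splitting each summand via the chain rule, its common expectation equals $-\text{KL}(\prob_{\phi^{\star}}(o_{1:T})\,\|\,\prob_{\phi}(o_{1:T}))$ plus the expected delimiter log-ratio; the former is at most $-\bigtriangleup_{\text{KL}}$ by definition of $\bigtriangleup_{\text{KL}}$, and the latter is at most $\log\tfrac{1}{c_{2}}$ by Assumption~\ref{assumption:single_token_prob_lower_bound}. Applying the same token-wise lower bound to the individual log-probabilities of $e_{i}$ shows that each summand lies in an interval of width $O((T+1)\log\tfrac{1}{c_{2}})$. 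Hoeffding's inequality then gives that with probability at least $1-\delta$ the sum deviates from its mean by at most $O\bigl((T+1)\sqrt{k\log\tfrac{1}{\delta}}\log\tfrac{1}{c_{2}}\bigr)$. Combining with the $2(k-1)\log\tfrac{1}{c_{1}}$ slack and the hypothesis $\bigtriangleup_{\text{KL}}>8\log\tfrac{1}{c_{1}c_{2}}$---calibrated so that the deterministic contributions amount to a net drift of order $-k\bigtriangleup_{\text{KL}}$---choosing $k$ polynomial in $\log\tfrac{1}{\epsilon},\log\tfrac{1}{\delta},T,\tfrac{1}{\bigtriangleup_{\text{KL}}},\log\tfrac{1}{c_{1}c_{2}}$ forces the right-hand side below $\log\epsilon$.

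For the label-flipping extension, substituting an arbitrary $\tilde{y}_{i}$ for $y_{i}$ changes only the last two chain-rule factors in $\log\prob_{\phi}(e_{i})$ and $\log\prob_{\phi^{\star}}(e_{i})$ (the label and delimiter tokens), which by Assumption~\ref{assumption:single_token_prob_lower_bound} shifts each summand by at most $O(\log\tfrac{1}{c_{2}})$ and leaves the range estimate intact; the $\bigtriangleup_{\text{KL}}$-level drift coming from the prefix tokens is essentially preserved, so the same Hoeffding argument gives the same polynomial sample complexity. The main technical obstacle is the bookkeeping in this last step: verifying that the margin $\bigtriangleup_{\text{KL}}>8\log\tfrac{1}{c_{1}c_{2}}$ is indeed strong enough to absorb both the $2(k-1)\log\tfrac{1}{c_{1}}$ independence slack and the per-example $\log\tfrac{1}{c_{2}}$ delimiter/label corrections, while still leaving a $\Theta(k\bigtriangleup_{\text{KL}})$ negative drift that dominates the Hoeffding tail for $k$ at the claimed polynomial rate.
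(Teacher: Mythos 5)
Your proof follows the same route as the paper: apply Assumption~\ref{assumption:new_line_conditional_independence} to factor the prompt into (approximately) independent per-example terms, identify the expectation of each log-ratio with $-\text{KL}(\prob_{\phi^{\star}}\|\prob_{\phi})$ up to a bounded correction, bound the range of each summand via Assumption~\ref{assumption:single_token_prob_lower_bound}, and conclude with Hoeffding. The only bookkeeping difference is that you keep the delimiter token inside $e_i$ and absorb its log-ratio into the mean, whereas the paper strips both the delimiter and the possibly-flipped label from each example (bounding the resulting $(c_1 c_2^2)^{\pm k}$ slack deterministically) so that the summand is exactly $\log\frac{\prob_{\phi}(x_i\oplus\tilde{y}_i)}{\prob_{\phi^{\star}}(x_i\oplus\tilde{y}_i)}$ with mean exactly $-\text{KL}$; this makes the label-flipping case cleaner in the paper, whereas your last paragraph handles it a bit informally (you should explicitly swap the flipped label for the true one before computing the KL, since the prefix $x_i$ alone is only $T-1$ tokens and its KL need not equal $\bigtriangleup_{\text{KL}}$, though the discrepancy is again $O(\log\frac{1}{c_2})$ and is absorbed by the same slack). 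Both versions yield the same polynomial dependence on $\log\frac{1}{\delta}$, $\log\frac{1}{\epsilon}$, $\log\frac{1}{c_1 c_2}$, $\frac{1}{\bigtriangleup_{\text{KL}}}$, and $T$.
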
 
	\begin{proof}
		In essence, we prove this lemma by using the approximate independence assumption in order to apply concentration inequalities. In addition, we use Assumption~\ref{assumption:single_token_prob_lower_bound} for bounding the distribution drift that is caused by the artificially inserted newline token.
		In particular, the log of the ratio of prompt probabilities according to different mixture components is concentrated around its expectation. Importantly, this expectation is equal to minus one times the Kullback–Leibler divergence between the components, plus a term that is caused by the mentioned distribution drift. Thus, we conclude that the ratio of the prompt probabilities according to the ground-truth task distribution and the other pretraining components converges to zero. Moreover, the rate of that convergence is exponential in both the number of in-context demonstrations, and the minimal Kullback–Leibler divergence between different mixture components.
		See full details in Section~\ref{section:ratio_of_the_prompt_probabilities_according_to_the_mixture_components_proof} of the appendix.
	\end{proof}

	Now we will use the above Lemma~\ref{lemma:ratio_of_the_prompt_probabilities_according_to_the_mixture_components} to analyze the in-context predictions, namely the labels $\tilde{y}$ that maximize the likelihood of the concatenation of the in-context prompt $p$, with the example $x\oplus \tilde{y}$ according to the pretrained model $\lm$. Essentially, we will aim to understand when these predictions are identical to the Bayes Optimal Classifier predictions. That it, when these predictions align with the labels $y$ that maximize the likelihood of the example $x\oplus y$ as determined by downstream task distribution $\prob_{\downstream}$.
	Since we are in the first scenario, where the margin between label candidates is large enough. We will prove that in this case, for large enough $k$, the ground truth in-context predictor also has margin that is at least half of the original margin. Hence, we will conclude that for such $k$ the ground truth in-context predictor is equal to the Bayes Optimal Classifier. Moreover, we will prove a lower bound on the margin of the in-context predictions, so the above still holds for any distribution that approximates the pretraining distribution sufficiently well.
	
	Formally, for a test example input $x$, we define the margin $\bigtriangleup\left(x,y,\tilde{y}\right)$ between two label candidates $y$ and $\tilde{y}$ as the difference between their conditional likelihoods $\prob_{\downstream}\left(y\,|\,x\right)$ and $\prob_{\downstream}\left(\tilde{y}\,|\,x\right)$ according to the downstream ground-truth distribution. Similarly, for a test example input $x$ and in-context prompt $p$, we define the margin $\bigtriangleup\left(p,x,y,\tilde{y}\right)$ between two label candidates $y$ and $\tilde{y}$ as the difference between their conditional likelihoods $\prob_{\pretraining}\left(y\,|\,p\oplus x\right)$ and $\prob_{\pretraining}\left(\tilde{y}\,|\,p\oplus x\right)$ according to the pretraining distribution conditioned on the prompt $p$. Using these definitions, we consider triplets of test example input $x$ and two labels candidates $y,\tilde{y}$ with margin at least two times the pretraining error $\bigtriangleup_{\text{pretraining}}$ as the first scenario. In this case we will prove that for large enough $k$ the ground truth in-context predictor also has margin of at least the pretraining 
	error $\bigtriangleup_{\text{pretraining}}$. 
	This means that deviations arising from imperfect task recognition will be negligible in this situation. Moreover, the margin will remain larger than the pretraining error, which means the pretrained model $\lm$ handles this case well.
	\begin{theorem}\label{theorem:ground_truth_in_context_predictions_preserve_margin}
		Let $\pretraining$ and $\downstream$ be a pair of pretraining distribution and downstream task, for which Assumption~\ref{assumption:prior_lower_bound} as well as the assumptions in Lemma~\ref{lemma:ratio_of_the_prompt_probabilities_according_to_the_mixture_components} upholds. Then, there exists $m_{\downstream}:\left(0,1\right)^{2}\rightarrow\mathbb{N}$ with the following property: for every test example $x$ and two label candidates $y,\tilde{y}$ with positive margin $\bigtriangleup\left(x,y,\tilde{y}\right)>0$ and $\delta>0$, if the number of in-context examples $k$ is at least $m_{\downstream}\left(\nicefrac{\bigtriangleup\left(x,y,\tilde{y}\right)}{2},\delta\right)$, then $\bigtriangleup\left(p,x,y,\tilde{y}\right)>\nicefrac{\bigtriangleup\left(x,y,\tilde{y}\right)}{2}+c_{1}^{2}-1$ with probability of at least $1-\delta$ (over the choice of the $k$ in-context examples).
		Moreover, the above still holds when the labels in $p$ are randomly flipped, and $m_{\downstream}$ can be chosen such that it will be polynomial in $\log{\frac{1}{\delta}},\log{\frac{1}{\epsilon}},\log{\frac{1}{c_1\cdot c_2\cdot c_3}},\frac{1}\bigtriangleup_{\text{KL}}$ and $T$.
	\end{theorem}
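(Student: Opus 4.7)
The plan is to reduce the in-context margin to the ground-truth margin in two steps: first collapse the pretraining mixture onto the ground-truth component $\phi^\star$ by invoking Lemma~\ref{lemma:ratio_of_the_prompt_probabilities_according_to_the_mixture_components}, and then remove the prompt $p$ from the conditioning on $\phi^\star$ using the approximate independence across delimiters granted by Assumption~\ref{assumption:new_line_conditional_independence}. The first step contributes an $\bigtriangleup(x,y,\tilde y)/2$ additive slack inherited from the Lemma's $\epsilon$ accuracy, and the second contributes the $c_1^2-1$ additive distortion that appears in the theorem statement.

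For the mixture-collapse step I would write $\prob_\pretraining(y\mid p\oplus x)=N/D$ with $N=\sum_\phi\prob(\phi)\prob_\phi(p\oplus x\oplus y)$ and $D=\sum_\phi\prob(\phi)\prob_\phi(p\oplus x)$, isolate the $\phi=\phi^\star$ summands $N_0,D_0$, and bound the tails $N-N_0,D-D_0$ using Assumption~\ref{assumption:new_line_conditional_independence} (so that $\prob_\phi(p\oplus x\oplus y)\le c_1^{-1}\prob_\phi(p)\prob_\phi(x\oplus y)$), Assumption~\ref{assumption:prior_lower_bound} ($\prob(\phi^\star)\ge c_3$), and iterated Assumption~\ref{assumption:single_token_prob_lower_bound} ($\prob_{\phi^\star}(x),\prob_{\phi^\star}(x\oplus y)\ge c_2^T$). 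This renders each relative tail $(N-N_0)/N_0$ and $(D-D_0)/D_0$ at most $\max_{\phi\ne\phi^\star}\prob_\phi(p)/\prob_{\phi^\star}(p)$ divided by $c_1^2 c_2^T c_3$. Invoking Lemma~\ref{lemma:ratio_of_the_prompt_probabilities_according_to_the_mixture_components} with accuracy of order $\bigtriangleup(x,y,\tilde y)\cdot c_1^2 c_2^T c_3$ then forces $|\prob_\pretraining(y\mid p\oplus x)-\prob_{\phi^\star}(y\mid p\oplus x)|$ and its $\tilde y$-analogue each below $\bigtriangleup(x,y,\tilde y)/4$, so that the total mixture-induced margin distortion is at most $\bigtriangleup(x,y,\tilde y)/2$.

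For the prompt-stripping step I would apply Assumption~\ref{assumption:new_line_conditional_independence} to decompose $\prob_{\phi^\star}(p\oplus x)$ and $\prob_{\phi^\star}(p\oplus x\oplus y')$ via the delimiter at the end of $p$, yielding the uniform lower bound $\prob_{\phi^\star}(y'\mid p\oplus x)\ge c_1^2\prob_{\phi^\star}(y'\mid x)$ for every label $y'$. Combining this with the conservation identity $\sum_{y'}\prob_{\phi^\star}(y'\mid p\oplus x)=1$ produces the companion upper bound $\prob_{\phi^\star}(\tilde y\mid p\oplus x)\le 1-c_1^2+c_1^2\prob_{\phi^\star}(\tilde y\mid x)$, whence $\prob_{\phi^\star}(y\mid p\oplus x)-\prob_{\phi^\star}(\tilde y\mid p\oplus x)\ge c_1^2\bigtriangleup(x,y,\tilde y)+c_1^2-1$. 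Chaining this lower bound with the $\bigtriangleup(x,y,\tilde y)/2$ mixture-distortion bound from the previous step delivers the claimed $\bigtriangleup(p,x,y,\tilde y)\ge\bigtriangleup(x,y,\tilde y)/2+c_1^2-1$.

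The main obstacle is quantitative bookkeeping: the $c_2^T$ factor in the lower bound on $\prob_{\phi^\star}(x\oplus y)$ forces $\log(1/\epsilon)$ in the invocation of Lemma~\ref{lemma:ratio_of_the_prompt_probabilities_according_to_the_mixture_components} to scale linearly in $T$, but since that lemma's sample complexity is already polynomial in $\log(1/\epsilon)$ and in $T$, the composite $m_\downstream$ still fits the polynomial envelope promised by the theorem. The random-label-flipping extension is inherited directly from Lemma~\ref{lemma:ratio_of_the_prompt_probabilities_according_to_the_mixture_components}, since neither the mixture-collapse reduction nor the prompt-stripping argument ever inspects the labels inside $p$.
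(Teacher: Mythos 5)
Your overall architecture mirrors the paper's: expand $\bigtriangleup(p,x,y,\tilde y)$ via conditional probabilities, isolate the $\phi^\star$ contribution, invoke Lemma~\ref{lemma:ratio_of_the_prompt_probabilities_according_to_the_mixture_components} together with Assumptions~\ref{assumption:single_token_prob_lower_bound} and~\ref{assumption:prior_lower_bound} to kill the non-$\phi^\star$ tail, and use Assumption~\ref{assumption:new_line_conditional_independence} to peel $p$ off the $\phi^\star$ term. The genuinely different ingredient is your prompt-stripping step. The paper applies the independence sandwich in both directions --- a lower bound on $\prob_{\phi^\star}\left(p\oplus x\oplus y\right)$ and an upper bound on $\prob_{\phi^\star}\left(p\oplus x\oplus\tilde y\right)$ --- so that the $\phi^\star$-only ratio $A/C$ becomes $c_1^2\prob_{\phi^\star}\left(y\mid x\right)-c_1^{-2}\prob_{\phi^\star}\left(\tilde y\mid x\right)$, retaining a unit leading coefficient on $\bigtriangleup$. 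You instead combine only the uniform lower bound $\prob_{\phi^\star}\left(y'\mid p\oplus x\right)\ge c_1^2\prob_{\phi^\star}\left(y'\mid x\right)$ with the total-probability identity $\sum_{y'}\prob_{\phi^\star}\left(y'\mid p\oplus x\right)=1$; that derivation is correct and tidy, but it yields $c_1^2\bigtriangleup\left(x,y,\tilde y\right)+c_1^2-1$, i.e.\ it multiplies the margin by $c_1^2$.

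That multiplicative loss is exactly where the bookkeeping breaks. Subtracting your $\bigtriangleup/2$ mixture-collapse slack from $c_1^2\bigtriangleup+c_1^2-1$ gives $c_1^2\bigtriangleup-\bigtriangleup/2+c_1^2-1$, and $c_1^2\bigtriangleup-\bigtriangleup/2\ge\bigtriangleup/2$ is equivalent to $c_1^2\ge1$; the chain closes only for $c_1=1$. Even if you tighten the mixture slack to an arbitrarily small $\eta>0$ by a sharper invocation of the Lemma, you would need $\eta\le\left(c_1^2-\nicefrac{1}{2}\right)\bigtriangleup$, which is impossible whenever $c_1^2\le\nicefrac{1}{2}$. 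To reach the theorem's target $\nicefrac{\bigtriangleup}{2}+c_1^2-1$ for all admissible $c_1\in(0,1]$, the prompt-stripping step must keep the coefficient of $\bigtriangleup$ near one and absorb the independence distortion additively into the $c_1^2-1$ term, which is the role played by the paper's two-sided bound on $A/C$ together with its separate $\left|B/C\right|<\bigtriangleup/5$ and $\left|D/C\right|<1/4$ tail controls.
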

	\begin{proof}
		We begin by writing the difference between the ground-truth label likelihood, and the likelihood of other another label $\tilde{y}$ explicitly. Specifically, by the definition of conditional probabilities we have that:
		\begin{align}
		\prob_{\pretraining}\left(y\,|\,p\oplus x\right)-\prob_{\pretraining}\left(\tilde{y}\,|\,p\oplus x\right)=\frac{\sum_{\phi}\prob_{\pretraining}\left(\phi\right)\left[\prob_{\phi}\left(p\oplus x\oplus y\right)-\prob_{\phi}\left(p\oplus x\oplus\tilde{y}\right)\right]}{\sum_{\phi}\prob_{\pretraining}\left(\phi\right)\prob_{\phi}\left(p\oplus x\right)}
		\end{align}
		Now, denote by $\phi^{\star}$ the mixture component of $\downstream$, then Assumption~\ref{assumption:new_line_conditional_independence} assures us that for each component in the mixture, the textual prompt is approximately independent of the test example. So we can use Lemma~\ref{lemma:ratio_of_the_prompt_probabilities_according_to_the_mixture_components} and get that in both the numerator and the denominator, the $\phi^{\star}$ term is the dominant term in the sum. Thus, we will get that the difference of the labels` likelihoods is approximately the fraction of the $\phi^{\star}$ terms, which is equal to the original downstream task margin. So we will conclude that the margin of the in-context predictor is at least half of the downstream task original margin. See full details in Section~\ref{section:ground_truth_in_context_predictions_preserve_margin_proof} of the appendix.
	\end{proof}
	We denote by $\bigtriangleup_{\text{\ensuremath{\downstream}}}$ the minimal margin of the Bayes Optimal Classifier predictions in downstream task $\downstream$~\ie~the minimal margin between the Bayes Optimal Classifier prediction and another labels. With the above definitions and theorem, we can combine the scenario of large margins, which are preserved by the in-context predictor, with the scenario of small margins, in which the loss is minimally affected by the wrong prediction, and prove our main in-context learnability results: 
	\begin{theorem}\label{theorem:latent_concept_pre_training_method_of_moments_in_context_learning}
		Let $\tilde{\mathcal{H}}$ be hypothesis classes of downstream distributions, and denote by $\pretraining$ a mixture distribution on $\tilde{\mathcal{H}}$ for which assumptions~\ref{assumption:existence_of_pre_training_algorithm},\ref{assumption:new_line_conditional_independence},\ref{assumption:single_token_prob_lower_bound} and~\ref{assumption:prior_lower_bound} uphold.
		Further assume that the margin $\bigtriangleup_{\text{\ensuremath{\downstream}}}$ of any downstream task $\downstream\in\tilde{\mathcal{H}}$ is at least $4\cdot\left(1-c_1^2\right)$, and the minimal Kullback–Leibler divergence between different distributions is greater than $\log\frac{1}{c_1\cdot c_{2}}$. Then $\tilde{\mathcal{H}}$ is efficiently in-context learnable after pretraining on $\pretraining$ (see Definition~\ref{definition:in_context_learning}).
	\end{theorem}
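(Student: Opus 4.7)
The plan is to combine Assumption~\ref{assumption:existence_of_pre_training_algorithm} (pretraining approximation) with Theorem~\ref{theorem:ground_truth_in_context_predictions_preserve_margin} (in-context margin preservation), argue pointwise in the test input $x$ that the in-context predictor equals the Bayes-optimal label, and then lift this pointwise statement to a bound on the expected excess loss via Fubini and Markov. Concretely, I would fix the pretraining tolerance $\bigtriangleup_{\text{pretraining}}\coloneqq(1-c_{1}^{2})/4$, which depends only on $c_{1}$. By Assumption~\ref{assumption:existence_of_pre_training_algorithm} with parameters $(\bigtriangleup_{\text{pretraining}},\delta/2)$, with probability at least $1-\delta/2$ the pretrained model $\lm$ approximates every conditional of $\pretraining$ up to additive error $\bigtriangleup_{\text{pretraining}}$, at a sample complexity polynomial in $\delta^{-1}$; I then condition on this event.

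Next I would argue pointwise in $x$. Write $y^{\star}(x)\coloneqq\argmax_{y}\prob_{\downstream}(y\,|\,x)$ for the Bayes-optimal label. For every competing label $\tilde y\in\Sigma\setminus\{y^{\star}(x)\}$ the ground-truth margin $\bigtriangleup(x,y^{\star}(x),\tilde y)$ is at least $\bigtriangleup_{\downstream}\ge 4(1-c_{1}^{2})$, so Theorem~\ref{theorem:ground_truth_in_context_predictions_preserve_margin}, applied with its first argument $\bigtriangleup_{\downstream}/2$ and a confidence $\delta'$ (chosen below), guarantees that with probability at least $1-\delta'$ over the prompt $p$ the posterior margin satisfies $\bigtriangleup(p,x,y^{\star}(x),\tilde y)\ge\bigtriangleup_{\downstream}/2+c_{1}^{2}-1\ge 1-c_{1}^{2}=4\bigtriangleup_{\text{pretraining}}$. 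A union bound over the at most $|\Sigma|-1$ alternative labels, together with the $2\bigtriangleup_{\text{pretraining}}$ slack lost when replacing $\prob_{\pretraining}$ by $\lm$, leaves an $\lm$-margin of at least $2\bigtriangleup_{\text{pretraining}}>0$ at $x$, so $h_{p}(x)\coloneqq\argmax_{y'}\lm(p\oplus x\oplus y')$ coincides with $y^{\star}(x)$.

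To turn this pointwise guarantee into the required PAC bound I would write the excess loss as $L_{\text{in-context},\downstream}-\text{Bayes Error Rate}=\expectation_{x\sim\downstream}\left[\prob_{\downstream}(y^{\star}(x)\mid x)-\prob_{\downstream}(h_{p}(x)\mid x)\right]\le\expectation_{x\sim\downstream}\left[\mathbf{1}[h_{p}(x)\ne y^{\star}(x)]\right]$, interchange the expectations over $x$ and over the prompt $p$, and substitute the previous step to get $\expectation_{p}\!\left[L_{\text{in-context},\downstream}-\text{Bayes Error Rate}\right]\le(|\Sigma|-1)\delta'$. Markov's inequality then yields $\prob_{p}\!\left[L_{\text{in-context},\downstream}-\text{Bayes Error Rate}>\epsilon\right]\le(|\Sigma|-1)\delta'/\epsilon$, which I set to $\delta/2$ by choosing $\delta'=\epsilon\delta/(2|\Sigma|)$; union-bounded with the pretraining failure, the total failure probability is at most $\delta$. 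Both sample complexities stay polynomial in $\epsilon^{-1},\delta^{-1}$ because the in-context $m_{\downstream}$ from Theorem~\ref{theorem:ground_truth_in_context_predictions_preserve_margin} depends only logarithmically on $1/\delta'$.

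The step I expect to be the main obstacle is precisely this lift from a per-$x$ high-probability statement to an integrated PAC bound: a direct union bound over $x\in\Sigma^{\star}$ fails because the test-input space is infinite, and without capacity bounds on $\lm$ no uniform convergence argument over $x$ is available. The Fubini/Markov reduction sidesteps this at the price of an extra $1/\epsilon$ inside the in-context sample complexity, which is affordable only because Lemma~\ref{lemma:ratio_of_the_prompt_probabilities_according_to_the_mixture_components} depends logarithmically on its confidence; the parallel $|\Sigma|$ factor from union-bounding over candidate labels is harmless provided the vocabulary is finite, as is standard in NLP.
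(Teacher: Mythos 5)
Your proof reaches the correct conclusion and its structure (Assumption~\ref{assumption:existence_of_pre_training_algorithm} for pretraining $+$ Theorem~\ref{theorem:ground_truth_in_context_predictions_preserve_margin} for margin preservation, then stitch together) is the same skeleton the paper uses, but two of the steps are genuinely different. First, you fix the pretraining tolerance $\bigtriangleup_{\text{pretraining}}=(1-c_1^2)/4$ once and for all, so your pretraining sample complexity is $\epsilon$-independent and you need only a single large-margin case; the paper instead chooses $\bigtriangleup_{\text{pretraining}}=\epsilon/8$ and runs a two-case decomposition (margin $\ge 8\bigtriangleup_{\text{pretraining}}$ versus not), in which the small-margin case contributes at most $\epsilon$ to the regret because the conditional gap is itself below $8\bigtriangleup_{\text{pretraining}}$. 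Under the stated uniform hypothesis $\bigtriangleup_{\downstream}\ge 4(1-c_1^2)$ your single-case version is legitimate, and it even saves an $\epsilon$-dependence in the pretraining budget; the paper's two-case structure would matter only if the margin were allowed to vanish for some inputs. Second, and more substantively, you lift the per-$x$ high-probability statement of Theorem~\ref{theorem:ground_truth_in_context_predictions_preserve_margin} to the integrated loss via Fubini and Markov, paying a $1/\epsilon$ factor inside the downstream confidence $\delta'$ and absorbing it thanks to the logarithmic dependence in Lemma~\ref{lemma:ratio_of_the_prompt_probabilities_according_to_the_mixture_components}. The paper instead exploits (implicitly) that the only randomness in Theorem~\ref{theorem:ground_truth_in_context_predictions_preserve_margin}'s proof is the event ``$\prob_{\phi}(p)/\prob_{\phi^{\star}}(p)$ is small for all $\phi\ne\phi^{\star}$,'' which depends on $p$ alone and not on the test input $x$; since $\bigtriangleup_{\downstream}$ uniformly lower-bounds every margin, a single such event (with a union bound over the at most $1/c_3$ mixture components) forces $h_p(x)=y^{\star}(x)$ for \emph{all} $x$ simultaneously, so no Markov step and no $1/\epsilon$ slack are needed. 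Your Fubini/Markov route is more explicit about the quantifier order (the paper's ``the contribution of any $x$ to the loss is at most $\epsilon$'' glosses over exactly this), at the cost of a slightly looser—but still polynomial—confidence budget, plus a $|\Sigma|$ factor from union-bounding over candidate labels that the paper avoids for the same uniformity reason.
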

	\begin{proof}
		Assumption~\ref{assumption:existence_of_pre_training_algorithm} assures us the existence of the pretraining algorithm with a polynomial sample complexity. So we will prove the theorem with pretraining sample complexity that is derived from this algorithm, where the accuracy required from pretraining is $8$ times the accuracy required from in-context learning (see analysis below). In addition, Theorem~\ref{theorem:ground_truth_in_context_predictions_preserve_margin} assures us that for large enough $k$ the ground truth in-context predictions are equal to the Bayes Optimal Classifier predictions, so we will prove the theorem with downstream sample complexity that is derived from Theorem~\ref{theorem:ground_truth_in_context_predictions_preserve_margin}.
		
		Let $\epsilon,\delta>0$ and denote by $\lm$ the model that was learned during the pretraining process. We will prove that the contribution of any $x$ to the loss $L_{\text{in-context},\tilde{D}}$ (see Equation~\ref{equation:in_context_learning_loss}) is at most $\epsilon$ and hence complete the proof. Given $x$, let $y$ be the Bayes Optimal Classifier prediction for that $x$. Then, we will split the analysis into two cases. 
		
		The first case is of examples $x,y$ such that theirs margin $\bigtriangleup\left(x,y,\tilde{y}\right)$ from any alternative label candidate $\tilde{y}$ is at least $8\cdot\bigtriangleup_{\text{pretraining}}$. In this case, Theorem~\ref{theorem:ground_truth_in_context_predictions_preserve_margin} assures us that for large enough $k$ the ground truth in-context predictor also has margin $\bigtriangleup\left(p,x,y,\tilde{y}\right)$ that is greater than 
		$\frac{1}{2}\cdot\bigtriangleup\left(x,y,\tilde{y}\right)+c_{1}^{2}-1$. Now since $\bigtriangleup\left(x,y,\tilde{y}\right)\ge\bigtriangleup_{\text{\ensuremath{\downstream}}}>4\cdot\left(1-c_1^2\right)$ we have that $\bigtriangleup\left(p,x,y,\tilde{y}\right)$ is at least $2\cdot\bigtriangleup_{\text{pretraining}}$. Thus, we conclude that in this case, the predictions of the ground truth in-context prediction and $\lm$ are the same. Furthermore both of them are identical to the Bayes Optimal Classifier prediction, and hence $x$ does not contribute to the loss.
		
		Moving to the second case, and denote by $\tilde{y}$ the in-context prediction of $\lm$. Then, the arguments from the previous paragraph assure us that $\prob_{\phi^{\star}}\left(y\,|\,x\right)-\prob_{\phi^{\star}}\left(\tilde{y}\,|\,x\right)<8\cdot\bigtriangleup_{\text{pretraining}}$, since otherwise we will get that $\lm\left(y\,|\,x\right)>\lm\left(\tilde{y}\,|\,x\right)$. Finally, we will choose that $\bigtriangleup_{\text{pretraining}}=\frac{\epsilon}{8}$, and hence get that also in the second case, the contribution of $x$ to the loss is less than $\epsilon$.
	\end{proof}

	\section{Related Work}
	Since the Probably Approximately Correct (PAC) learning framework was introduced in~\cite{valiant1984theory}, a rich line of works has extended the framework to distribution-dependent bounds~\citep{benedek1991learnability,vayatis1999distribution,sabato2013distribution} \textit{inter alia}. These works relax PAC's adversarial requirement of generalization for all input distributions, and only consider distributions that satisfy certain statistical properties. Consequently, these papers provide more realistic sample complexity bounds. In this paper we present a framework for in-context learning learnability that uses self-supervised pretraining to assist in solving downstream tasks. In this framework, we follow the above line of distribution-specific works with the distinct feature of a self-supervised pretraining phase, and of learning with frozen models through their input context.
	The advantages of such pretraining have been studied from a theoretical perspective before. For example,~\cite{saunshi2021a} and~\cite{colin2021why} demonstrate that language modeling can benefit downstream tasks either by prompt tuning or head tuning. However, unlike our work, in their analysis some weights are learned. In contrast, we focus on frozen models that can learn only from their input context, which includes the concatenation of few inputs and outputs pairs.
	
	Returning to in-context learning, a recent line of work study learning paradigms that are based on concatenating several inputs into a single input context. For example,~\cite{garg2022what} show that a Transformer architecture is able to discover in-context learning algorithms for simple function classes, such as two-layer neural networks, and decision trees. Similarly,~\cite{akyrek2023what} show that a Transformer architecture is able to discover efficient Bayes optimal least square learning algorithms, and~\cite{laskin2022context} show that a Transformer architecture is able to discover efficient in-context reinforcement learning algorithms. Additionally,~\cite{levine2022the} studied the inductive bias of in-context learning, and proved that Transformer based language models can model much stronger dependencies between text segments that appeared in the same training example. Finally,~\cite{li2023transformers} proved a multi-task generalization bound for in-context learning with the Transformer architecture. However, unlike our work, the pretraining distribution in all of the above works designed to include input context that includes few-shot demonstrations explicitly. As a consequence, these methods do not answer the mysterious question of how frozen pretrained models can learn from in-context prompts that do not resemble their pretraining distribution.
	
	Another recent line of work investigates possible mechanisms that enable in-context learning. For example,~\cite{olsson2022context} provide indirect evidence  that \emph{induction heads} might constitute the mechanism for in-context learning in large transformer models. In addition, several recent papers~\citep{akyrek2023what,dai2022can,von2022transformers} showed that from an expressivity point of view, self-attention architectures can implement a gradient descent algorithm with in-context examples. Finally,~\cite{chan2022data} provided empirical evidence that some data distributional properties encourage in-context learning even when the frozen models do not see prompts during pretraining. However, unlike our work, all of the above works does not provide theoretical guarantees of  in-context learnability.
	
	Finally,~\cite{xie2022an} are the closest to our work. They studied a language-inspired toy distribution for pretraining and analyzed a pretraining distribution consisting of a mixture of Hidden Markov Models (HMMs). Unlike their results, our results are applicable to any mixture distribution that meets our mild assumption. In addition, unlike our polynomial sample complexity guarantees, their analysis guarantees in-context learning only for an infinite number of in-context examples. Lastly, their analysis assumes perfect pretraining distribution learning, but our approach captures imperfect pretraining.
	
	\section{Conclusion}
	The discovery of in-context learning in large LMs, made by~\citet{brown2020language}, was surprising to many in our field. 
	A model that was pretrained to maximize the likelihood of natural text was able to make use of \textit{concatenated }training examples of downstream natural language tasks---inputs that do not resemble its pretraining distribution, and moreover these inputs improved the model's ability to perform the task. Our theoretical results, based on a common latent multitask framework for the pretraining phase, shed light on the above surprising mysteries. With our PAC-based framework, we were able to provide sample complexity guarantees for in-context learning in such pretrained models, which are not only the first finite sample complexity results for this framework but they also indicate efficient (polynomial) in-context learning, which reflect the behavior of this setting in practice. 
	
	We hope that our framework can be used to deepen the understanding of the in-context learning phenomenon. In particular, we mark the connection between model size and the in-context learning efficiency as an interesting open question~\citep{wei2022emergent}. Additionally, in-context learning has shown to be capable of learning new tasks not included in the pre-training distribution~\citep{wei2023larger}. The extension of our results to such situations is an interesting open question.

	\section*{Acknowledgments}	
	The authors would like to thank Yotam Wolf and Oshri Avnery for their assistance and advice. This research was supported by the ERC (European Research Council) and the ISF (Israel Science Foundation).
	
	\bibliography{main}
	\bibliographystyle{arxiv_upload}
	
	\clearpage
	\appendix

	\section{Proof of Lemma~\ref{lemma:ratio_of_the_prompt_probabilities_according_to_the_mixture_components} }\label{section:ratio_of_the_prompt_probabilities_according_to_the_mixture_components_proof}
	We begin by explicitly writing $\prob_{\phi}\left(p\right)$. Specifically, Assumption~\ref{assumption:new_line_conditional_independence} assure us that each in-context example $x_{i}\oplus y_{i}\oplus"\backslash n"$ is approximately independent from the other in-context examples:
	\begin{align}
	c_{1}^{k}\le\frac{\prod_{i=1}^{k}\prob_{\phi}\left(x_{i}\oplus y_{i}\oplus"\backslash n"\right)}{\prob_{\phi}\left(p\right)}\le c_{1}^{-k}
	\end{align}
	In addition, Assumption~\ref{assumption:single_token_prob_lower_bound} bounds the distribution drift that caused by the artificial new line token:
	\begin{align}
	\prob_{\phi}\left(x_{i}\oplus y_{i}\right)\le\prob_{\phi}\left(x_{i}\oplus y_{i}\oplus"\backslash n"\right)\le\prob_{\phi}\left(x_{i}\oplus y_{i}\right)\cdot c_{2}
	\end{align}
	Similarly, we can use Assumption~\ref{assumption:single_token_prob_lower_bound} again to bounds also the distribution drift that caused by potential label flipping:
	\begin{align}
	c_{2}<\frac{\prob_{\phi}\left(x_{i}\oplus\tilde{y}_{i}\right)}{\prob_{\phi}\left(x_{i}\oplus y_{i}\right)}<\frac{1}{c_{2}}
	\end{align}
	where $\tilde{y}_{i}$ denote the "corrected" label. So overall, we got that:
	\begin{align}
	\left(c_{1}c_{2}^{2}\right)^{k}<\frac{\prod_{i=1}^{k}\prob_{\phi}\left(x_{i}\oplus\tilde{y}_{i}\right)}{\prob_{\phi}\left(p\right)}<\left(c_{1}c_{2}^{2}\right)^{-k}
	\end{align}
	Now, we can write the ratio of the prompt probabilities according to the ground-truth component and the other mixture components explicitly and get that:
	\begin{align}\label{equation:explicit_form_ratio}
	\log\frac{\prob_{\phi}\left(p\right)}{\prob_{\phi^{\star}}\left(p\right)}\le\sum_{i=1}^{k}\log\frac{\prob_{\phi}\left(x_{i}\oplus\tilde{y}_{i}\right)}{\prob_{\phi^{\star}}\left(x_{i}\oplus\tilde{y}_{i}\right)}+4k\log\frac{1}{c_{1}c_{2}}
	\end{align}
	Intuitively, as each term in the sum is independent of the other, and equals (in expectation) to the Kullback–Leibler divergence between $\phi^{\star}$ and $\phi$:
	\begin{align}\label{equation:negative_kl_in_excpectation}
	\expectation_{p}\left[\frac{1}{k}\sum_{i=1}^{k}\log\frac{\prob_{\phi}\left(x_{i}\oplus\tilde{y_{i}}\right)}{\prob_{\phi^{\star}}\left(x_{i}\oplus\tilde{y_{i}}\right)}\right]=-\text{KL}\left(\prob_{\phi^{\star}},\prob_{\phi}\right)
	\end{align}
	Equation~\ref{equation:explicit_form_ratio} assure us that as long as the ground truth downstream task component $\phi^{\star}$ distinguish enough from the rest of the mixture components, $\frac{\prob_{\phi}\left(p\right)}{\prob_{\phi^{\star}}\left(p\right)}$ converge to zero.  Formally, Assumption~\ref{assumption:single_token_prob_lower_bound} assure us that
	\begin{align}
	\left|\log\frac{\prob_{\phi}\left(x_{i}\oplus\tilde{y}_{i}\right)}{\prob_{\phi^{\star}}\left(x_{i}\oplus\tilde{y}_{i}\right)}\right|\le T\log\frac{1}{c_{2}}
	\end{align}
	for any $i,p,\phi,\phi^{\star}$. Hence, we can use Hoeffding's inequality~\citep{hoeffding1994probability} to estimate the deviation of this variable from its expectation (see Equation ~\ref{equation:negative_kl_in_excpectation}). More specifically, together with Equation~\ref{equation:explicit_form_ratio} we get that with probability of at least $1-\delta$ over the choice of the $k$ examples in $p$ we have that:
	\begin{align}
	\frac{\prob_{\phi}\left(p\right)}{\prob_{\phi^{\star}}\left(p\right)}<\exp\left(-\frac{k}{2}\left(\text{KL}\left(\prob_{\phi^{\star}},\prob_{\phi}\right)-8\log\frac{1}{c_{1}c_{2}}\right)\right)
	\end{align} for
	$k>\frac{\left(\log\frac{1}{\delta}\right)\left(16T^{2}\right)\left(\log^{2}\frac{1}{c_{2}}\right)}{\text{KL}\left(\prob_{\phi^{\star}},\prob_{\phi}\right)^{2}}$. Finally, Lemma~\ref{lemma:ratio_of_the_prompt_probabilities_according_to_the_mixture_components} follows by choosing that $m_{\downstream}$ is the maximum between $k>\frac{\left(\log\frac{1}{\delta}\right)\left(16T^{2}\right)\left(\log^{2}\frac{1}{c_{2}}\right)}{\text{KL}\left(\prob_{\phi^{\star}},\prob_{\phi}\right)^{2}}$ and $\frac{2\log\frac{1}{\epsilon}}{\bigtriangleup_{\text{KL}}-8\log\frac{1}{c_{1}c_{2}}}$, since the first term take care of the $1-\delta$ probability, and the second term term take care of the $\epsilon$-approximation.
	
	\section{Proof of Theorem~\ref{theorem:ground_truth_in_context_predictions_preserve_margin} }\label{section:ground_truth_in_context_predictions_preserve_margin_proof}

	We begin by writing the difference labels likelihood explicitly. Specifically, by the definition of conditional probabilities we have that:
	\begin{align}\label{equation:explicit_margin}
	\prob_{\pretraining}\left(y\,|\,p\oplus x\right)-\prob_{\pretraining}\left(\tilde{y}\,|\,p\oplus x\right)=\frac{\sum_{\phi}\prob_{\pretraining}\left(\phi\right)\left[\prob_{\phi}\left(p\oplus x\oplus y\right)-\prob_{\phi}\left(p\oplus x\oplus\tilde{y}\right)\right]}{\sum_{\phi}\prob_{\pretraining}\left(\phi\right)\prob_{\phi}\left(p\oplus x\right)}
	\end{align}
	Now, Assumption~\ref{assumption:new_line_conditional_independence} assure us that for each component $\phi$ in the mixture the prompt $p$ is approximately independent from the test example:
	\begin{align}
	c_{1}\cdot\prob_{\phi}\left(p\right)\cdot\prob_{\phi}\left(x\oplus y\right)\le\prob_{\phi}\left(p\oplus x\oplus y\right)&\le\frac{1}{c_{1}}\cdot\prob_{\phi}\left(p\right)\cdot\prob_{\phi}\left(x\oplus y\right)\\\prob_{\phi}\left(p\oplus x\right)&\le\frac{1}{c_{1}}\cdot\prob_{\phi}\left(p\right)\cdot\prob_{\phi}\left(x\right)
	\end{align}
	So substituting this inequalities in Equation~\ref{equation:explicit_margin} give us that:
	\begin{align}
	\prob_{\pretraining}\left(y\,|\,p\oplus x\right)-\prob_{\pretraining}\left(\tilde{y}\,|\,p\oplus x\right)&\ge\frac{\sum_{\phi}\prob_{\pretraining}\left(\phi\right)\cdot\prob_{\phi}\left(p\right)\cdot\left[c_{1}^{2}\cdot\prob_{\phi}\left(x\oplus y\right)-\frac{1}{c_{1}^{2}}\cdot\prob\left(x\oplus\tilde{y}\right)\right]}{\sum_{\phi}\prob_{\pretraining}\left(\phi\right)\cdot\prob_{\phi}\left(p\right)\cdot\prob_{\phi}\left(x\right)}
	\end{align}
	Now, we will separate the sums in the denominator and the numerator to term that contain the mixture component $\phi^{\star}$ that corresponds to $\downstream$, and term that contain the other mixture components. So for clarity we will denote by $A,B,C,D$ the different sums:
	\begin{align}
	A&\coloneqq\prob_{\pretraining}\left(\phi^{\star}\right)\cdot\prob_{\phi^{\star}}\left(p\right)\cdot\left[c_{1}^{2}\cdot\prob_{\phi^{\star}}\left(x\oplus y\right)-\frac{1}{c_{1}^{2}}\cdot\prob_{\phi^{\star}}\left(x\oplus\tilde{y}\right)\right]\\B&\coloneqq\sum_{\phi\ne\phi^{\star}}\prob_{\pretraining}\left(\phi\right)\cdot\prob_{\phi}\left(p\right)\cdot\left[c_{1}^{2}\cdot\prob_{\phi}\left(x\oplus y\right)-\frac{1}{c_{1}^{2}}\cdot\prob_{\phi}\left(x\oplus\tilde{y}\right)\right]\\C&\coloneqq\prob_{\pretraining}\left(\phi^{\star}\right)\cdot\prob_{\phi^{\star}}\left(p\right)\cdot\prob_{\phi^{\star}}\left(x\right)\\D&\coloneqq\sum_{\phi\ne\phi^{\star}}\prob_{\pretraining}\left(\phi\right)\cdot\prob_{\phi}\left(p\right)\cdot\prob_{\phi}\left(x\right)
	\end{align}
	And get that:
	\begin{align}
	\prob_{\pretraining}\left(y\,|\,p\oplus x\right)-\prob_{\pretraining}\left(\tilde{y}\,|\,p\oplus x\right)\ge\frac{A}{C+D}+\frac{B}{C+D}
	\end{align}
	Now we know that $D>0$ and hence that:
	\begin{align}
	\left|\frac{A}{C+D}-\frac{A}{C}\right|&=\left|\frac{AD}{C^{2}+D}\right|\le\left|\frac{A}{C}\right|\cdot\left|\frac{D}{C}\right|\\\left|\frac{B}{C+D}-\frac{B}{C}\right|&=\left|\frac{BD}{C^{2}+D}\right|\le\left|\frac{B}{C}\right|\cdot\left|\frac{D}{C}\right|
	\end{align}
	And by definition we have that:
	\begin{align}
	\frac{A}{C}=\bigtriangleup\left(x,y,\tilde{y}\right)+\left(c_{1}^{2}-1\right)\prob_{\phi}\left(y\,|\,x\right)+\left(\frac{1}{c_{1}^{2}}-1\right)\cdot\prob_{\phi}\left(\tilde{y}\,|\,x\right)>\bigtriangleup\left(x,y,\tilde{y}\right)-1+c_{1}^{2}
	\end{align}
	So it is enough to show that $\left|\frac{B}{C}\right|<\frac{1}{5}\cdot\bigtriangleup\left(x,y,\tilde{y}\right)$ and that $\left|\frac{D}{C}\right|<\frac{1}{4}$, in order to prove that:
	\begin{align}
	\prob_{\pretraining}\left(y\,|\,p\oplus x\right)-\prob_{\pretraining}\left(\tilde{y}\,|\,p\oplus x\right)>\frac{3}{4}\cdot\frac{A}{C}-\frac{5}{4}\cdot\left|\frac{B}{C}\right|>\frac{1}{2}\cdot\bigtriangleup\left(x,y,\tilde{y}\right)+c_{1}^{2}-1
	\end{align}
	Starting with $\left|\frac{B}{C}\right|$, by the triangle inequality have that:
	\begin{align}
	\left|\frac{B}{C}\right|\le\sum_{\phi\ne\phi^{\star}}\frac{\prob_{\pretraining}\left(\phi\right)}{\prob_{\pretraining}\left(\phi^{\star}\right)}\cdot\frac{\prob_{\phi}\left(p\right)}{\prob_{\phi^{\star}}\left(p\right)}\cdot\frac{1}{c_{1}^{2}}\cdot\frac{1}{\prob_{\phi^{\star}}\left(x\right)}
	\end{align}
	In addition, Assumption~\ref{assumption:single_token_prob_lower_bound} assure us that $\prob_{\phi^{\star}}\left(x\right)>c_{2}^{T}$, hence we conclude that:
	\begin{align}
	\left|\frac{B}{C}\right|\le\sum_{\phi\ne\phi^{\star}}\frac{\prob_{\pretraining}\left(\phi\right)}{\prob_{\pretraining}\left(\phi^{\star}\right)}\cdot\frac{\prob_{\phi}\left(p\right)}{\prob_{\phi^{\star}}\left(p\right)}\cdot\frac{1}{c_{1}^{2}c_{2}^{T}}
	\end{align}
	Similarly, we also have that:
	\begin{align}
	\left|\frac{D}{C}\right|\le\sum_{\phi\ne\phi^{\star}}\frac{\prob_{\pretraining}\left(\phi\right)}{\prob_{\pretraining}\left(\phi^{\star}\right)}\cdot\frac{\prob_{\phi}\left(p\right)}{\prob_{\phi^{\star}}\left(p\right)}\cdot\frac{1}{c_{2}^{T}}
	\end{align}
	Now, the sum of the concepts' prior is at most one, and by Assumption~\ref{assumption:prior_lower_bound} we have that $\prob_{\pretraining}\left(\phi^{\star}\right)\ge c_{3}$, so it is enough to show that $\frac{\prob_{\phi}\left(p\right)}{\prob_{\phi^{\star}}\left(p\right)}<\frac{\bigtriangleup\left(x,y,\tilde{y}\right)}{5\cdot c_{1}^{-2}\cdot c_{2}^{-T}\cdot c_{3}^{-1}}\eqqcolon\tilde{\epsilon}$ for any concept $\phi\ne\phi^{\star}$, in order to complete the proof.
	Finally, Lemma~\ref{lemma:ratio_of_the_prompt_probabilities_according_to_the_mixture_components} assure us that under the conditions of Theorem~\ref{theorem:ground_truth_in_context_predictions_preserve_margin}, there exist $\tilde{m}_{\downstream}:\left(0,1\right)^{2}\rightarrow\mathbb{N}$ such that if the number of in-context examples $k$ is at least $\tilde{m}_{\downstream}\left(\tilde{\epsilon},\delta\right)$, then with probability of at least $1-\delta$  we get exactly that $\frac{\prob_{\phi}\left(p\right)}{\prob_{\phi^{\star}}\left(p\right)}<\tilde{\epsilon}$. Importantly, $\tilde{m}_{\downstream}$ is logarithmic in the accuracy level $\tilde{\epsilon}=\frac{\bigtriangleup\left(x,y,\tilde{y}\right)}{5\cdot c_{1}^{-2}\cdot c_{2}^{-T}\cdot c_3^{-1}}$, and hence polynomial in $T$.

\end{document}